\newtheorem{theorem}{Theorem}
\newtheorem{assumption}[theorem]{Assumption}
\newtheorem{proof}{Proof}[section]
\DeclareSymbolFont{extraup}{U}{zavm}{m}{n}
\DeclareMathSymbol{\varheart}{\mathalpha}{extraup}{86}
\newcommand{\PP}{\mathbb{P}}
\newcommand{\EE}{\mathbb{E}}
\title{Policy Optimization with Second-Order Advantage Information}
\author{
        Jiajin Li\thanks{These authors contribute equally to this work.}\\
        \href{mailto:jjli@se.cuhk.edu.hk}{jjli@se.cuhk.edu.hk}\\
        The Chinese University of Hong Kong 
        \and
        Baoxiang Wang\footnotemark[1]  \\
       \href{mailto:bxwang@cse.cuhk.edu.hk}{bxwang@cse.cuhk.edu.hk}\\
        The Chinese University of Hong Kong
}
\date{\vspace{-4ex}}
\begin{document}
\maketitle

\begin{abstract}
%background
Policy optimization on high-dimensional continuous control tasks exhibits its difficulty caused by the large variance of the policy gradient estimators.
%ASDG
We present the action subspace dependent gradient (ASDG) estimator which incorporates the Rao-Blackwell theorem (RB) and Control Variates (CV) into a unified framework to reduce the variance.
%POSA
To invoke RB, our proposed algorithm (POSA) learns the underlying factorization structure among the action space based on the second-order advantage information. 
%deepfm
POSA captures the quadratic information explicitly and efficiently by utilizing the wide \& deep architecture.
%experiment
Empirical studies show that our proposed approach demonstrates the performance improvements on high-dimensional synthetic settings and OpenAI Gym's MuJoCo continuous control tasks.
\end{abstract}

\section{Introduction}

Deep reinforcement learning (RL) algorithms have been widely applied in various challenging problems, including video games \cite{mnih2015human}, board games \cite{silver2017mastering}, robotics \cite{levine2016end}, dynamic routing \cite{wu2017framework,li2016contextual}, and continuous control tasks \cite{schulman2017proximal,lillicrap2015continuous}.
An important approach among these methods is policy gradient (PG).
Since its inception \cite{williams1992simple}, PG has been continuously improved by the Control Variates (CV) \cite{oates2017control} theory.
Examples are REINFORCE \cite{williams1992simple}, Advantage actor-critic (A2C) \cite{mnih2016asynchronous}, Q-prop \cite{gu2016q}, and action-dependent baselines \cite{liu2018action-dependent,grathwohl2018backpropagation,tucker2018mirage}.
However, when dealing with high-dimensional action spaces, CV has limited effects regarding the sample efficiency.
Rao-Blackwell theorem (RB) \cite{casella1996rao}, though not heavily adopted in policy gradient, is commonly used with CV to address high-dimensional spaces  \cite{ranganath2014black}.
%In fact, RB reduces the variance several orders of magnitude more than CV do, as demonstrated in the block box variational inference model \cite{ranganath2014black}.

Motivated by the success of RB in high-dimensional spaces \cite{ranganath2014black}, we incorporate both RB and CV into a unified framework.
We present the action subspace dependent gradient (ASDG) estimator. 
ASDG first breaks the original high dimensional action space into several low dimensional action subspaces and replace the expectation (i.e., policy gradient) with its conditional expectation over subspaces (RB step) to reduce the sample space.
A baseline function associated with each of the corresponding action subspaces is used to further reduce the variance (CV step).
While ASDG is benefited from both RB and CV's ability to reduce the variance, we show that ASDG is unbiased under relatively weak assumptions over the advantage function.

The major difficulty to invoke RB is to find a satisfying action domain partition.
Novel trials such as \cite{wu2018variance} utilize RB under the conditional independence assumption which assumes that the policy distribution is fully factorized with respect to the action.
Whilst it dramatically reduces the estimation variance, such a strong assumption limits the policy distribution flexibility and \cite{wu2018variance} is conducting the optimization in a restricted domain.
In our works, we show that Hessian of the advantage with respect to the action is theoretically connected with the action space structure.
Specifically, the block-diagonal structure of Hessian is corresponding to the partition of the action space.
We exploit such second-order information with the evolutionary clustering algorithm \cite{chakrabarti2006evolutionary} to learn the underlying factorization structure in the action space.
Instead of the vanilla multilayer perceptron, we utilize the wide \& deep architecture \cite{cheng2016wide} to capture such information explicitly and efficiently. 
With the second-order advantage information, ASDG finds the partition that approximates the underlying structure of the action space.

We evaluate our method on a variety of reinforcement learning tasks, including a high-dimensional synthetic environment and several OpenAI Gym’s MuJoCo continuous control environments. 
We build ASDG and POSA on top of proximal policy optimization (PPO), and demonstrate that ASDG consistently obtains the ideal balance: while improving the sample efficiency introduced by RB \cite{wu2018variance}, it keeps the accuracy of the feasible solution \cite{liu2018action-dependent}. 
In environments where the model assumptions are satisfied or minimally violated empirically, while not trivially satisfied by \cite{wu2018variance}, POSA outperforms previous studies with the overall cumulated rewards it achieves.
In the continuous control tasks, POSA is either competitive or superior, depending on whether the action space exhibits its structure under the environment settings. 

\section{Background}

\subsection{Notation}
We present the canonical reinforcement learning (RL) formalism in this section.
Consider policy learning in the discrete-time Markov decision process (MDP) defined by the tuple $(\mathcal{S},\mathcal{A}, \mathcal{T}, r,\rho_0,\gamma)$ where $\mathcal{S} \in \mathbb{R}^n$ is the $n$ dimensional state space, $\mathcal{A} \in \mathbb{R}^m$ is the $m$ dimensional action space, $\mathcal{T}:\mathcal{S}\times\mathcal{A}\times\mathcal{S} \to \mathbb{R}^+$ is the environment transition probability function, $r:\mathcal{S}\times\mathcal{A}\to\mathbb{R}$ is the reward function, $\rho_0$ is the initial state distribution and $\gamma \in (0,1]$ is the unnormalized discount factor. RL learns a stochastic policy $\pi_\theta:\mathcal{S}\times\mathcal{A}\to\mathbb{R}_+$, which is parameterized by $\theta$, to maximize the expected cumulative reward
\[J(\theta) = \EE_{s\sim\rho_\pi, a \sim \pi}[\sum_{t=0}^\infty \gamma^tr(s_t,a_t)].\]
In the above equation, $\rho_\pi(s) =\sum_{t=1}^\infty \gamma^{t-1} \PP(s_t =s)$ is the discounted state visitation distribution. 
Define the value function
\[V^\pi(s_t) = \EE_{\pi}[\sum_{t'\geq t}^\infty \gamma^{t'-t}r(s_{t'},a_{t'})|s_t,\pi]\]
to be the expected return of policy $\pi$ at state $s_t$. Define the state-action function
\[Q^\pi(s_t,a_t) = \EE_{\pi}[\sum_{t'\geq t}^\infty \gamma^{t'-t}r(s_{t'},a_{t'})|s_t,a_t,\pi]\]
to be the expected return by policy $\pi$ after taking the action $a_t$ at the state $s_t$. We use $\hat{Q}^\pi(s_t,a_t)$ and $\hat{V}^\pi(s_t)$ to denote the empirical function approximator of $Q^\pi(s_t,a_t)$ and $V^\pi(s_t)$, respectively. Define the advantage function to be the gap between the value function and the action-value, as $A^\pi(s_t,a_t) = Q^\pi(s_t,a_t)-V^\pi(s_t)$.
To simplify the notation, we focus on the time-independent formulation $J(\theta)=\EE_{\pi,\rho_\pi}[r(s,a)]$. 
According to the policy gradient theorem \cite{williams1992simple}, the gradient of the expected cumulative reward can be estimated as
\[\nabla_\theta J(\theta) = \EE_\pi[\nabla_\theta\log \pi(a|s)Q^\pi(s,a)].\]

\subsection{Variance Reduction Methods}

In practice, the vanilla policy gradient estimator is commonly estimated using Monte Carlo samples. A significant obstacle to the estimator is the sample efficiency. We review three prevailing variance reduction techniques in Monte Carlo estimation methods, including Control Variates, Rao-Blackwellization, and Reparameterization Trick.

\textbf{Control Variates} - Consider the case we estimate the expectation $\EE_{p(x)}[h(x)]$ with Monte Carlo samples $\{x_i\}_{i=1}^B$ from the underlying distribution $p(x)$. Usually, the original Monte Carlo estimator has high variance, and the main idea of Control Variates is to find the proper baseline function $g(x)$ to partially cancel out the variance. A baseline function $g(x)$ with its known expectation over the distribution $p(x)$ is used to construct a new estimator
\[ \hat{h}(x) = h(x)-\eta(g(x)-\EE_{p}[g(x)]),\]
where $\eta$ is a constant determined by the empirical Monte Carlo samples. The Control Variates method is unbiased but with a smaller variance $Var(\hat{h}(x)) \leq Var(h(x))$ at the optimal value $\eta^\ast=\frac{Cov(h,g)}{Var(g)}$.

\textbf{Rao-Blackwellization} - Though most of the recent policy gradient studies reduce the variance by Control Variates, the Rao-Blackwell theorem \cite{casella1996rao} decreases the variance significantly more than CV do, especially in high-dimensional spaces \cite{ranganath2014black}. The motivation behind RB is to replace the expectation with its conditional expectation over a subset of random variables. In this way, RB transforms the original high-dimensional integration computation problem into estimating the conditional expectation on several low-dimensional subspaces separately. 

Consider a simple setting with two random variable sets $\mathcal{A}$ and $\mathcal{B}$ and the objective is to compute the expectation $\EE[h(\mathcal{A},\mathcal{B})]$. Denote that the conditional expectation $\hat{\mathcal{B}}$ as $\hat{\mathcal{B}}=\EE[h(\mathcal{A},\mathcal{B})|\mathcal{A}]$. The variance inequality
\[Var(\hat{\mathcal{B}}) \leq Var(h(\mathcal{A},\mathcal{B}))\]
holds as shown in the Rao-blackwell theorem. In practical, when $\mathcal{A}$ and $\mathcal{B}$ are in high dimensional spaces, the conditioning is very useful and it reduces the variance significantly. The case of multiple random variables is hosted in a similar way.

\textbf{Reparameterization Trick} - One of the recent advances in variance reduction is the reparameterization trick. It provides an estimator with lower empirical variance compared with the score function based estimators, as demonstrated in \cite{kingma2013auto,ranganath2016hierarchical}. Using the same notation as is in the Control Variates section, we assume that the random variable $x$ is reparameterized by $x = f(\theta,\xi), \xi \sim q(\xi)$, where $q(\xi)$ is the base distribution (e.g., the standard normal distribution or the uniform distribution). 
Under this assumption, the gradient of the expectation $\EE_{p(x)}[h(x)]$ can be written as two identical forms i.e., the score function based form and reparameterization trick based form
\begin{equation}
\label{eq1}
\EE_p[\nabla_\theta \log p(x)h(x)] = \EE_q[\nabla_\theta f(\theta,\xi) \nabla_x h(x)].
\end{equation}
The reparameterization trick based estimator (the right-hand side term) has relatively lower variance. 
Intuitively, the reparameterization trick provides more informative gradients by exposing the dependency of the random variable $x$ on the parameter $\theta$. 
%In contrast, the score function gradient estimator only depends on the log-density function.
\subsection{Policy Gradient Methods}
Previous attempts to reduce the variance mainly focus on the Control Variates method in the policy gradient framework (i.e., REINFORCE, A2C, Q-prop). 
A proper choice of the baseline function is vital to reduce the variance. 
The vanilla policy gradient estimator, REINFORCE \cite{williams1992simple}, subtracts the constant baseline from the action-value function, 
\[\nabla_\theta J(\theta)_{RF} = \EE_\pi[\nabla_\theta\log \pi(a|s)(Q^\pi(s,a)-b)].\]

The estimator in REINFORCE is unbiased. The key point to conclude the unbiasedness is that the constant baseline function has a zero expectation with the score function. 
Motivated by this, the baseline function is set to be the value function $V^\pi(s)$ in the advantage actor-critic (A2C) method \cite{mnih2016asynchronous}, as the value function can also be regarded as a constant under the policy distribution $\pi(a|s)$ with respect to the action $a$. 
Thus the A2C gradient estimator is
\begin{align}
\nabla_\theta J(\theta)_{A2C} &=\EE_\pi[\nabla_\theta\log\pi(a|s)(Q^\pi(s,a)-V^\pi(s))]\notag\\
&=\EE_\pi[\nabla_\theta\log\pi(a|s)A^\pi(s,a)].\notag
\end{align}

To further reduce the gradient estimate variance to acquire a zero-asymptotic variance estimator, \cite{liu2018action-dependent} and \cite{grathwohl2018backpropagation} propose a general action dependent baseline function $b(s,a)$ based on the identity \eqref{eq1}. 
Note that the stochastic policy distribution $\pi_\theta(a|s)$ is reparametrized as $a = f(\theta,s,\xi),\xi\sim q(\xi)$, we rewrite Eq.~\eqref{eq1} to get a zero-expectation baseline function as below
\begin{equation}
\label{eq2}
\EE[\nabla_\theta\log\pi(a|s)b(s,a)-\nabla_\theta f(\theta,s,\xi)\nabla_a b(s,a)] = 0.
\end{equation}
Incorporating with the zero-expectation baseline \eqref{eq2}, the general action dependent baseline (GADB) estimator is formulated as
\begin{align}
\label{eq3}
\nabla_\theta J(\theta)_{GADB} = &\EE_\pi[\nabla_\theta\log\pi(a|s)(Q^\pi(s,a)-b(s,a))+\nabla_\theta f(\theta,s,\xi)\nabla_a b(s,a)].
\end{align}
\section{Methods}
\subsection{Construct the ASDG Estimator}
We present our action subspace dependent gradient (ASDG) estimator by applying RB on top of the GADB estimator.
Starting with Eq.~\eqref{eq3}, we rewrite the baseline function in the form of $b(s,a) = V^\pi(s) + c(s,a)$. The GADB estimator in Eq.~\eqref{eq3} is then formulated as
\begin{align*}
\label{eq4}
\nabla_\theta J(\theta)_{GADB} & = \EE_\pi[\nabla_\theta\log\pi(a|s)(A^\pi(s,a)-c(s,a))+\nabla_\theta f(\theta,s,\xi) \nabla_a c(s,a)].
\end{align*}

\begin{assumption}[Advantage Quadratic Approximation]
\label{as1}
Assume that the advantage function $A^\pi(s,a)$ can be locally second-order Taylor expanded with respect to $a$ at some point $a^\ast$, that is,
\begin{align}
A^\pi(a,s) \approx & A^\pi(a^\ast,s)+\nabla_a A^\pi(a,s)|_{a=a^\ast}^T(a-a^\ast) \notag \\ &+\frac{1}{2}(a-a^\ast)^T\nabla_{aa}A^\pi(a,s)|_{a=a^\ast}(a-a^\ast).
\end{align}
The baseline function $c(s,a)$ is chosen from the same family.
\end{assumption}

\begin{assumption}[Block Diagonal Assumption]
\label{as2}
Assume that the row-switching transform of Hessian $\nabla_{aa}A^\pi(a,s)|_{a=a^*}$ is a block diagonal matrix $diag(M_1,\dots,M_k)$, where $\sum_{k=1}^K\text{dim}(M_k)=m$.
\end{assumption}

Based on Assumption \eqref{as1} and \eqref{as2}, the advantage function $A^\pi(s,a)$ can be divided into $K$ independent components
\[ A^\pi(s,a) = \sum\limits_{k=1}^K A^\pi_k(s,a_{(k)}),\]
where $a_{(k)}$ denotes the projection of the action $a$ to the $k$-th action subspace corresponding to $M_k$. The baseline function $c(s,a)$ is divided in the same way.

\begin{theorem}[ASDG Estimator]
\label{theorem1}
If the advantage function $A^\pi(s,a)$ and the baseline function $c(s,a)$ satisfy Assumption \eqref{as1} and \eqref{as2}, the ASDG estimator $\nabla_\theta J(\theta)_{ASDG} $ is
\begin{align}
\label{gas}
\nabla_\theta J(\theta)_{ASDG} =&  \sum\limits_{k=1}^K \EE_{\pi(a_{(k)}|s)}[\nabla_\theta\log\pi(a_{(k)}|s)(A^\pi(s,a_{(k)})
-c(s,(a_{(k)},\tilde{a}_{(-k)})))
-\nabla_\theta f_k(\theta,s,\xi)\nabla_{a_{(k)}} c_k(s,a_{(k)})],\notag
\end{align}
where $\nabla_\theta f(\theta,s,\xi) \in \mathbb{R}^{N_\theta \times m}$ is divided into $K$ parts as $\nabla_\theta f = [\nabla_\theta f_1,...,\nabla_\theta f_K]$ and $N_\theta$ is the dimension of $\theta$.
\end{theorem}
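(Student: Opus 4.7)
The plan is to derive the ASDG estimator by starting from the GADB estimator and applying Rao-Blackwellization block by block along the partition $\{a_{(k)}\}_{k=1}^K$ induced by Assumption 2. I would first invoke the decomposition implied by Assumptions 1 and 2,
\[A^\pi(s,a) = \sum_{k=1}^K A_k^\pi(s,a_{(k)}), \qquad c(s,a) = \sum_{k=1}^K c_k(s,a_{(k)}),\]
and substitute into the GADB formula. Because $c$ splits across blocks, its gradient $\nabla_a c$ is block-structured along the partition, and the reparameterization term decomposes immediately as $\sum_k \nabla_\theta f_k(\theta,s,\xi)\,\nabla_{a_{(k)}} c_k(s,a_{(k)})$.

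For the score-function term I would proceed block by block. Factoring $\pi(a|s) = \pi(a_{(k)}|s)\,\pi(a_{(-k)}|a_{(k)},s)$ yields
\[\nabla_\theta \log \pi(a|s) = \nabla_\theta \log \pi(a_{(k)}|s) + \nabla_\theta \log \pi(a_{(-k)}|a_{(k)},s),\]
and since $A_k^\pi - c_k$ depends on $a$ only through $a_{(k)}$, the tower law lets me condition on $a_{(k)}$. The second score contribution then vanishes by the standard identity that the expected score of a density is zero, reducing the $k$-th summand to the single-block expectation $\EE_{\pi(a_{(k)}|s)}[\nabla_\theta \log \pi(a_{(k)}|s)(A_k^\pi(s,a_{(k)}) - c_k(s,a_{(k)}))]$.

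To land on the stated form I would reintroduce the full advantage and baseline at the cost of an auxiliary independent draw $\tilde{a}_{(-k)} \sim \pi(a_{(-k)}|s)$. Because $\nabla_\theta \log \pi(a_{(k)}|s)$ integrates to zero under its marginal, any additive term that does not depend on $a_{(k)}$ after this extra integration leaves the expectation unchanged; in particular the cross summands $\sum_{j\neq k} c_j(s,\tilde{a}_{(j)})$ are such terms, so $c_k(s,a_{(k)})$ may be replaced by $c(s,(a_{(k)},\tilde{a}_{(-k)}))$, and analogously one identifies the $A^\pi(s,a_{(k)})$ appearing in the theorem with $A_k^\pi$ modulo terms that vanish under the score expectation. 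Summing over $k$ and recombining with the reparameterization term produces the ASDG formula. The delicate step is this final one: justifying that the off-diagonal cross terms really can be absorbed via the zero-mean identity requires the auxiliary draw to be taken from the marginal $\pi(a_{(-k)}|s)$ rather than the conditional $\pi(a_{(-k)}|a_{(k)},s)$, and verifying that this substitution preserves unbiasedness (rather than merely providing an alternative Monte Carlo formula) is where I expect the main bookkeeping effort to lie.
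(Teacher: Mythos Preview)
Your proposal is correct and follows essentially the same route as the paper: decompose $A^\pi$ and $c$ along the block-diagonal partition, factor $\pi(a|s)=\pi(a_{(k)}|s)\pi(a_{(-k)}|a_{(k)},s)$ block by block, kill the cross terms via the zero-mean score identity under the tower law, and then reinstate the full $A^\pi$ and $c(s,(a_{(k)},\tilde a_{(-k)}))$ by adding back terms independent of $a_{(k)}$. The only cosmetic difference is that the paper organizes the argument as an explicit induction (peeling off one block and recursing on the remainder) rather than treating all $K$ summands in parallel; your flagged ``delicate step'' about drawing $\tilde a_{(-k)}$ from the marginal is indeed the only bookkeeping required, and the paper handles it exactly as you outline.
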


% short version 
% \begin{proof}
% Using the fact:
% \[E_{\pi(a|s)}[.] = E_{\pi(a_{(k)}|s)}E_{\pi(a_{(-k)}|a_{(k)},s)}[.]\]
% where $a_{(-k)}$ represents the dimension within action $a$ that are complementary to $a_{(k)}$. With the assumptions we have, we can obtain the ASDG estimator.
% \end{proof}
\begin{proof}
Using the fact that
\[\EE_{\pi(a|s)}[.] = \EE_{\pi(a_{(k)}|s)}\EE_{\pi(a_{(-k)}|a_{(k)},s)}[.],\]
where $a_{(-k)}$ represents the elements within $a$ that are complementary to $a_{(k)}$. With the assumptions we have
\begin{align}
\nabla J(\theta)_{ASDG} = & \EE_{\pi(a_{(k)}|s)}\EE_{\pi(a_{(-k)}|a_{(k)},s)}[(\nabla_\theta \log\pi(a_{(k)}|s)+\nabla_{\theta}\log\pi(a_{(-k)}|a_{(k)},s)) \notag\\
&(A^\pi_k(s,a_{(k)}) + \sum_{i\neq k}A^\pi_i(s,a_{(i)})- c_k(s,a_{(k)})-\sum_{i\neq k}c_i(s,a_{(i)}))\notag\\
&+\sum_{k=1}^K \nabla_\theta f_k(s,a_{(k)})\nabla_{a_{(k)}}c_k(s,a_{(k)})]\notag\\
%part(1)
= & \EE_{\pi(a_{(k)}|s)} \EE_{\pi(a_{(-k)}|a_{(k)},s)}[\nabla_\theta\log\pi(a_{(k)}|s)(A^\pi_k-c_k)-\nabla_\theta f_k\nabla_{a_{(k)}} c_k]\notag\\
& + \EE_{\pi(a_{(k)}|s)} \EE_{\pi(a_{(-k)}|a_{(k)},s)}[\nabla_\theta\log\pi(a_{(k)}|s)(\sum_{i\neq k}A^\pi_i-\sum_{i\neq k}c_i)]\label{eq6}\\
& + \EE_{\pi(a_{(k)}|s)} \EE_{\pi(a_{(-k)}|a_{(k)},s)}[\nabla_\theta\log\pi(a_{(-k)}|a_{(k)},s)(A^\pi_k-c_k)]\label{eq7}\\
& + \EE_{\pi(a_{(k)}|s)} \EE_{\pi(a_{(-k)}|a_{(k)},s)}[\nabla_\theta\log\pi(a_{(-k)}|a_{(k)},s)((\sum_{i\neq k}A^\pi_i-\sum_{i\neq k}c_i))-\sum_{i\neq k}\nabla_\theta f_i\nabla_{a_{(i)}} c_i]\notag \\
\stackrel{(\clubsuit)}{=}&\EE_{\pi(a_{(k)}|s)}[\nabla_\theta\log\pi(a_{(k)}|s)(A^\pi_k-c_k)-\nabla_\theta f_k\nabla_{a_{(k)}} c_k] \notag\\
& + \EE_{\pi(a_{(-k)}|a_{(k)},s)}[\nabla_\theta\log\pi(a_{(-k)}|a_{(k)},s)((\sum_{i\neq k}A^\pi_i-\sum_{i\neq k}c_i))-\sum_{i\neq k}\nabla_\theta f_i\nabla_{a_{(i)}} c_i]\label{eq8}\notag\\
\stackrel{(\varheart)}{=}&\sum\limits_{k=1}^K \EE_{\pi(a_{(k)}|s)}[\nabla_\theta\log\pi(a_{(k)}|s)(A^\pi_k-c_k)-\nabla_\theta f_k\nabla_{a_{(k)}} c_k] \notag \\
=& \sum\limits_{k=1}^K \EE_{\pi(a_{(k)}|s)}[\nabla_\theta\log\pi(a_{(k)}|s)(A^\pi_k + \sum_{i\neq k}A^\pi_i-c_k-\sum_{i\neq k}c_i)-\nabla_\theta f_k\nabla_{a_{(k)}} c_k] \notag \\
=& \sum\limits_{k=1}^K \EE_{\pi(a_{(k)}|s)}[\nabla_\theta\log\pi(a_{(k)}|s)(A^\pi(s,a)-c(s,a_{(k)},\tilde{a}_{(-k)}))-\nabla_\theta f_k\nabla_{a_{(k)}} c_k],
\end{align}

where $(\clubsuit)$ holds as term \eqref{eq6} and term \eqref{eq7} equal to zero (using the property that the expectation of the score function is zero) and $(\varheart)$ is expanded by induction. $\hfill\blacksquare$

\end{proof}
Our assumptions are relatively weak compared with previous studies on variance reduction for policy optimization. Different from the fully factorization policy distribution assumed in \cite{wu2018variance}, our method relaxes the assumption to the constraints on the advantage function $A^\pi(s,a)$ with respect to the action space instead. Similar to that, we just use this assumption to obtain the structured factorization action subspaces to invoke the Rao-Blackwellization and our estimator does not introduce additional bias.

\textbf{Connection with other works} - If we assume the Hessian matrix of the advantage function has no block diagonal structure under any row switching transformation (i.e., $K=1$), ASDG in Theorem. \ref{theorem1} is the one inducted in \cite{liu2018action-dependent} and \cite{grathwohl2018backpropagation}. If we otherwise assume that Hessian is diagonal (i.e., $K=m$), the baseline function $c(s,a_{(k)},\tilde{a}_{(-k)})$ equals to $\sum_{i\neq k} c_i(s,a_{(i)})$, which means that each action dimension is independent with its baseline function. Thus, the estimator in \cite{wu2018variance} is obtained. 

\textbf{Selection of the baseline functions $c(s,a)$} - Two approaches exist to find the baseline function, including minimizing the variance of the PG estimator or minimizing the square error between the advantage function and the baseline function \cite{liu2018action-dependent,grathwohl2018backpropagation}.
Minimizing the variance is hard to implement in general, as it involves the gradient of the score function with respect to the baseline function parameter.
In our work, we use a neural network advantage approximation as our baseline function by minimizing the square error. 
Under the assumption that the variance of reparametrization term $\nabla_\theta f_k(\theta,s,\xi)\nabla_{a_{(k)}} c_k(s,a_{(k)})$ is closed to zero, the two methods yield the same result.

\subsection{Action Domain Partition with Second-Order Advantage Information}

When implementing the ASDG estimator, Temporal Difference (TD) learning methods such as Generalized Advantage Estimation (GAE) \cite{degris2012off,schulman2015high} allow us to obtain the estimation $\hat{A}(s,a)$ based on the value function $V^w(s)$ via
\begin{equation}
\label{gae}
\hat{A}(s_t,a_t) = \sum_{t^\prime \geq t}^T (\lambda\gamma)^{t^\prime-t} \delta_{t^\prime},
\end{equation}
where
\begin{equation}
\delta_t = \EE[r_t+\gamma V^w(s_{t+1})-V^w(s_t)]
\end{equation}
and $\lambda$ is the discount factor of the $\lambda$-return in GAE.
GAE further reduces the variance and avoids the action gap at the cost of a small bias.

Obviously, we cannot obtain the second-order information $\nabla_{aa}A(s,a)$ with the advantage estimation in GAE identity \eqref{gae}. 
Hence, apart from the value network $V^w(s)$, we train a separate advantage network to learn the advantage information. 
The neural network approximation $A^\mu(s,a)$ is used to smoothly interpolate the realization values $\hat{A}(s,a)$, by minimizing the square error 
\begin{equation}
\label{opt1}
\min\limits_{\mu} ||\hat{A}(s,a)-A^\mu(s,a)||^2.
\end{equation}
As shown in assumption \eqref{as2}, we use the block diagonal matrix to approximate the Hessian matrix and subsequently obtain the structure information in the action space. 
In the above advantage approximation setting, the Hessian computation is done by first approximating the advantage realization value and then differentiating the advantage approximation to obtain an approximate Hessian. 
However, for any finite number of data points there exists an infinite number of functions, with arbitrarily satisfied Hessian and gradients, which can perfectly approximate the advantage realization values \cite{li2017gradient}. 
Optimizing such a square error objective leads to unstable training and is prone to yield poor results. 
To alleviate this issue, we propose a novel wide \& deep architecture \cite{cheng2016wide} based advantage net. 
In this way, we divide the advantage approximator into two parts, including the quadratic term and the deep component, as
\begin{equation*}
A^\mu(s,a) = \beta_1 \cdot A_{wide} + \beta_2 \cdot A_{deep},
\end{equation*}
where $\beta_1$ and $\beta_2$ are the importance weights.
Subsequently, we make use of Factorization Machine (FM) model as our wide component
\begin{equation*}
A_{wide}(s,a) = w_0(s) + w_1(s)^Ta + w_2(s) w_2(s)^T \odot aa^T,
\end{equation*}
where $w_0(s) \in \mathbb{R}$, $w_1(s) \in \mathbb{R}^{m}$ and $w_2(s)\in \mathbb{R}^{m\times m^\prime}$ are the coefficients associated with the action. Also, $m^\prime$ is the dimension of latent feature space in the FM model. Note that the Hadamard product $A \odot B = \sum_{i,j} A_{ij}B_{ij}$. 
To increase the signal-to-noise ratio of the second-order information, we make use of wide components Hessian $w_2(s)w_2(s)^T$ as our Hessian approximator in POSA. 
The benefits are two-fold.
On the one hand, we can compute the Hessian via the forward propagation with low computational costs. 
On the other hand, the deep component involves large noise and uncertainties and we obtain stable and robust Hessian by excluding the deep component from calculating Hessian. 

The Hessian matrix contains both positive and negative values. 
However, we concern only the pairwise dependency between the action dimensions, which can be directly represented by the absolute value of Hessian. 
For instance, considering a quadratic function $f(x) = a + b^Tx + x^TCx, x\in \mathbb{R}^m$, it can be written as $f(x) = a + \sum_{i} b_ix_i + \sum_{i,j} C_{ij} x_ix_j$. 
The elements in the Hessian matrix satisfy $\frac{\partial^2f(x)}{\partial x_i\partial x_j} = C_{ij}$. When $C_{ij}$ is close to zero, $x_i$ and $x_j$ are close to be independent. 
Thus we can decompose the function $f(x)$ accordingly optimize the components separately.

We modify the evolutionary clustering algorithm in \cite{chakrabarti2006evolutionary} by using the absolute approximating Hessian $|w_2(s) w_2(s)^T|$ as the affinity matrix in the clustering task. 
In other words, each row in the absolute Hessian is regarded as a feature vector of that action dimension when running the clustering algorithm. With the evolutionary clustering algorithm, our policy optimization with second-order advantage information algorithm (POSA) is described in Alg.\eqref{algo}. 
%In theorem \eqref{theorem1}, it involves two advantage information approximation terms to construct the ASDG estimator. 

\begin{algorithm}[!htbp]
\SetAlgoLined
\caption{Policy Optimization with Second-Order Advantage Information (POSA)}
\KwIn{number of iterations $N$, number of value iterations $M_w$, batch size $B$, number of subspaces $K$, initial policy parameter $\theta$, initial value and advantage parameters $w$ and $\mu$\;}
\KwOut{Policy optimal parameter $\theta$ }
\For{each iteration $n$ in $[N]$}{
Collect a batch of trajectory data  $\{s_t^{(i)}, a_t^{(i)},r_t^{(i)}\}_{i=1}^B$ \;
\For{$M_\theta$ iterations}{
	Update $\theta$ by one SGD step using PPO with ASDG in Theorem \eqref{theorem1}\;
    }
\For{$M_w$ iterations}{
	Update $w$ and $\mu$ by minimizing $||V^w(s_t)-R_t||_2^2$ and $||\hat{A}(s_t,a_t)-A^\mu(s_t,a_t)||_2^2$ in one SGD step \;
    }
Estimate $\hat{A}(s_t,a_t)$ using $V^w(s_t)$ by GAE \eqref{gae}\;
Calculate the action subspace partition $a_{(k)}$ based on the absolute Hessian $|w_2(s) w_2(s)^T|$ by the evolutionary clustering algorithm\;
}
\label{algo}
\end{algorithm}

\section{Experiments and Results}

We demonstrate the sample efficiency and the accuracy of ASDG and Alg.\eqref{algo} in terms of both performance and variance. ASDG is compared with several of the state-of-the-art gradient estimators. 
\begin{itemize}
%\item \textbf{Advantage actor critic (A2C)} \cite{mnih2016asynchronous} uses the value functions as the baselines for variance reduction, which is the only action-independent baseline, namely only stated based. 
\item \textbf{Action dependent factorized baselines (ADFB)} \cite{wu2018variance} assumes fully factorized policy distributions, and uses $A(s,(\bar{a}_{(k)}, a_{(-k)}))$ as the $k$-th dimensional baseline. The subspace $a_{(k)}$ is restricted to contain only one dimension, which is the special case of ASDG with $K=m$.
\item \textbf {Generalized advantage dependent baselines (GADB)} \cite{liu2018action-dependent,grathwohl2018backpropagation} uses a general baseline function $c(s,a)$ which depends on the action. It does not utilize Rao-Blackwellization and is our special case when $K=1$.
\end{itemize}

\subsection{Implementation Details}

Our algorithm is built on top of PPO where the advantage realization value is estimated by GAE. Our code is available at \href{https://github.com/wangbx66/Action-Subspace-Dependent}{https://github.com/wangbx66/Action-Subspace-Dependent}. We use a policy network for PPO and a value network for GAE that have the same architecture as is in \cite{mnih2016asynchronous,schulman2017proximal}. We utilize a third network which estimates the advantage $A^\mu(s,a)$ smoothly by solving Eq.~\eqref{opt1} to be our baseline function $c(s,a)$. The network computes the advantage and the Hessian matrix approximator $w_2(s)w_2(s)^T$ by a forward propagation. It uses the wide \& deep architecture. For the wide component, the state is mapped to $w_1(s)$ and $w_2(s)$ through two-layer MLPs, both with size 128 and $\tanh(\cdot)$ activation. The deep component $A_{deep}$ is a three-layer MLPs with size 128 and $\tanh(\cdot)$ activation. Our other parameters are consistent with those in \cite{schulman2017proximal} except that we reduce the learning rate by ten times (i.e., $3\cdot 10^{-4}$) for more stable comparisons.

\subsection{Synthetic High-Dimensional Action Spaces}

We design a synthetic environment with a wide range of action space dimensions and explicit action subspace structures to test the performance of Alg.\eqref{algo} and compare that with previous studies. The environment is a one-step MDP where the reward $r(s,a)=\sum_{k=1}^K a_{(k)}^TM_ka_{(k)} + \epsilon$ does not depend on the state $s$ (e.g., $\epsilon$ is a random noise). In the environment, the action is partitioned into $K$ independent subspaces with a stationary Hessian of the advantage function. Each of the subspace can be regarded as an individual agent. The environment setting satisfies both Assumption \eqref{as1} and \eqref{as2}.

\begin{figure*}[h]
\centering 
	\subfloat[Dim=4, K=2]{
	\includegraphics[width=0.44\textwidth]{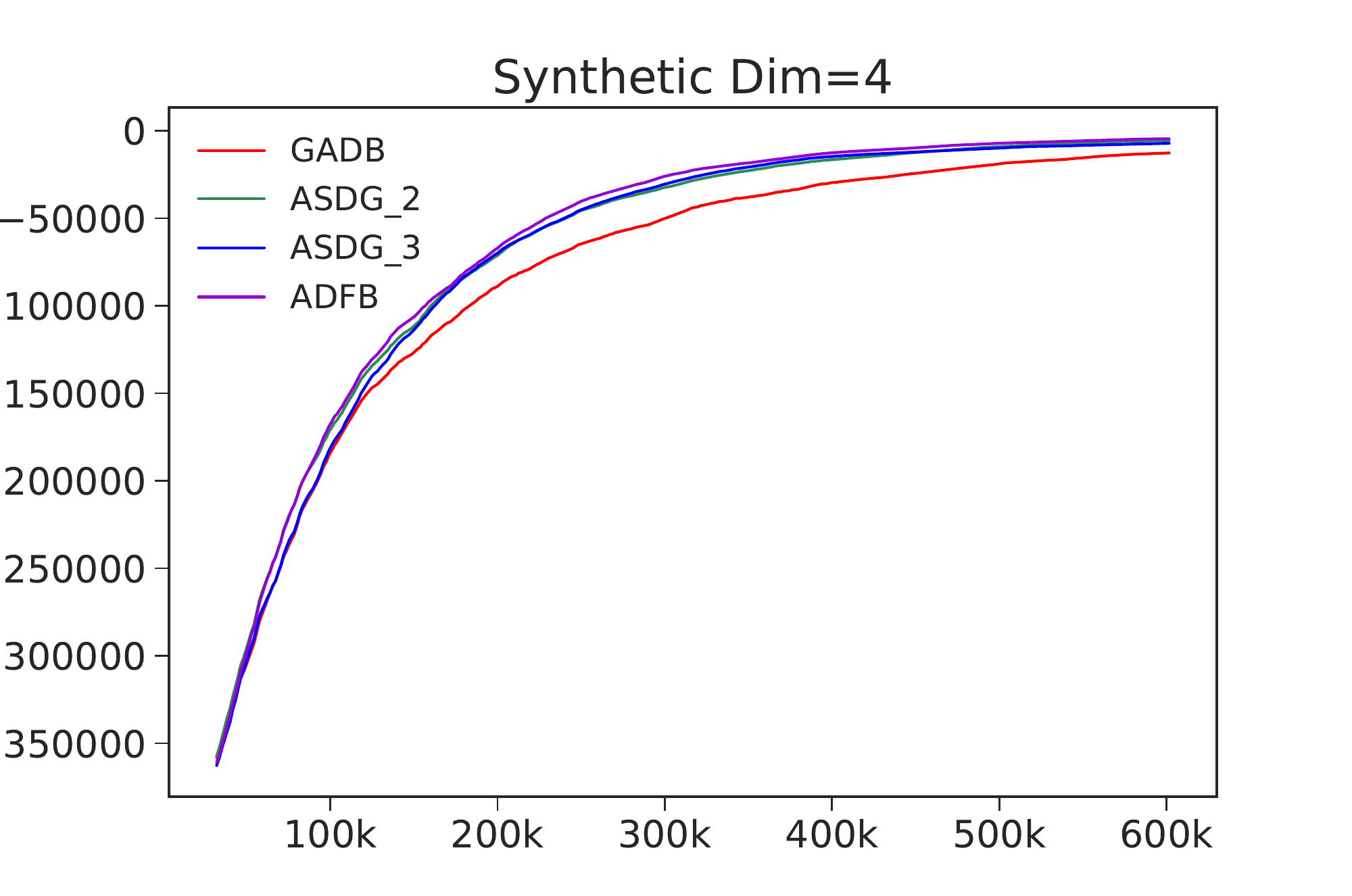}}
	\subfloat[Dim=10, K=2]{
	\includegraphics[width=0.44\textwidth]{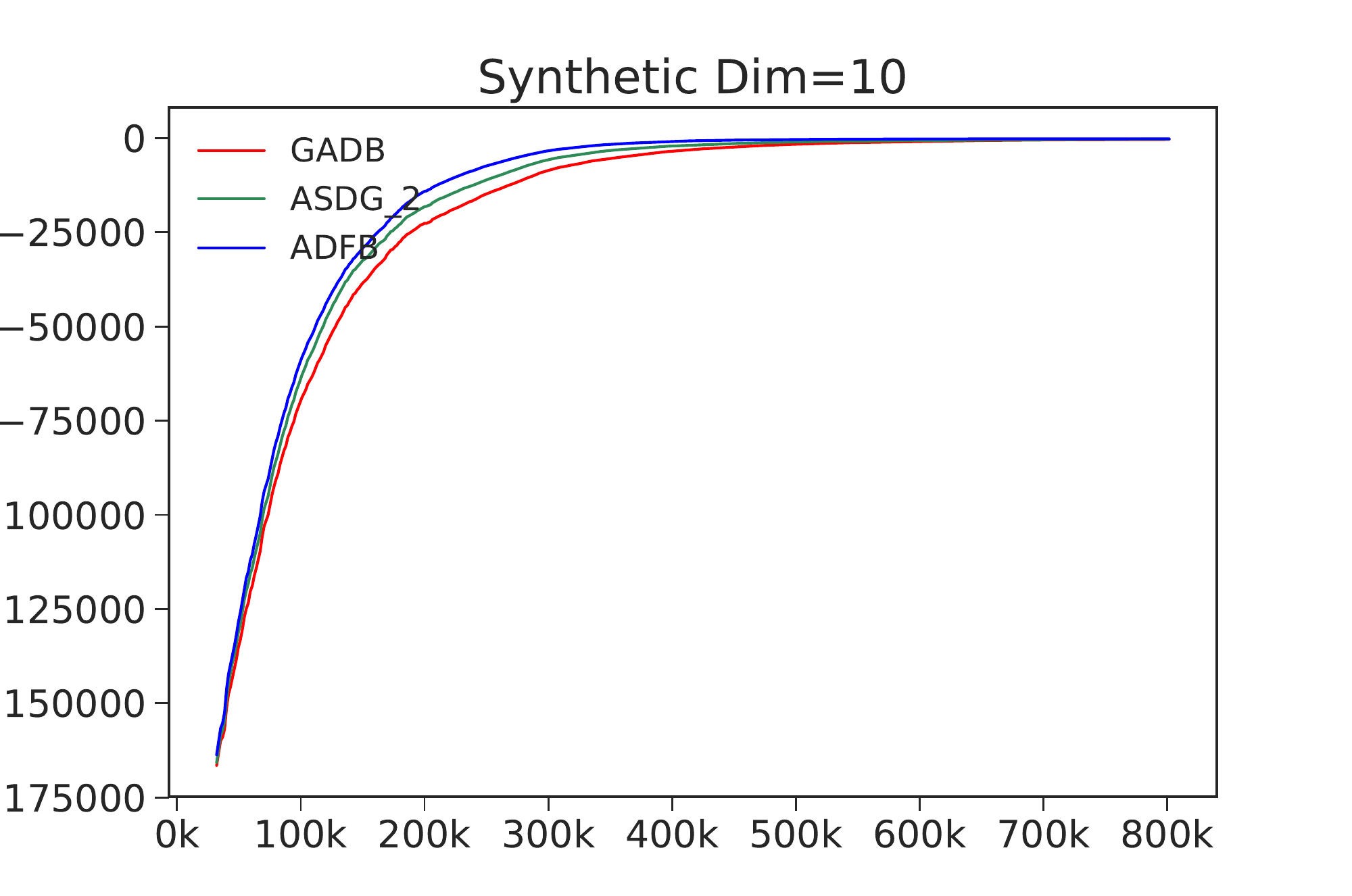}} \\
	\subfloat[Dim=20, K=4]{	\includegraphics[width=0.44\textwidth]{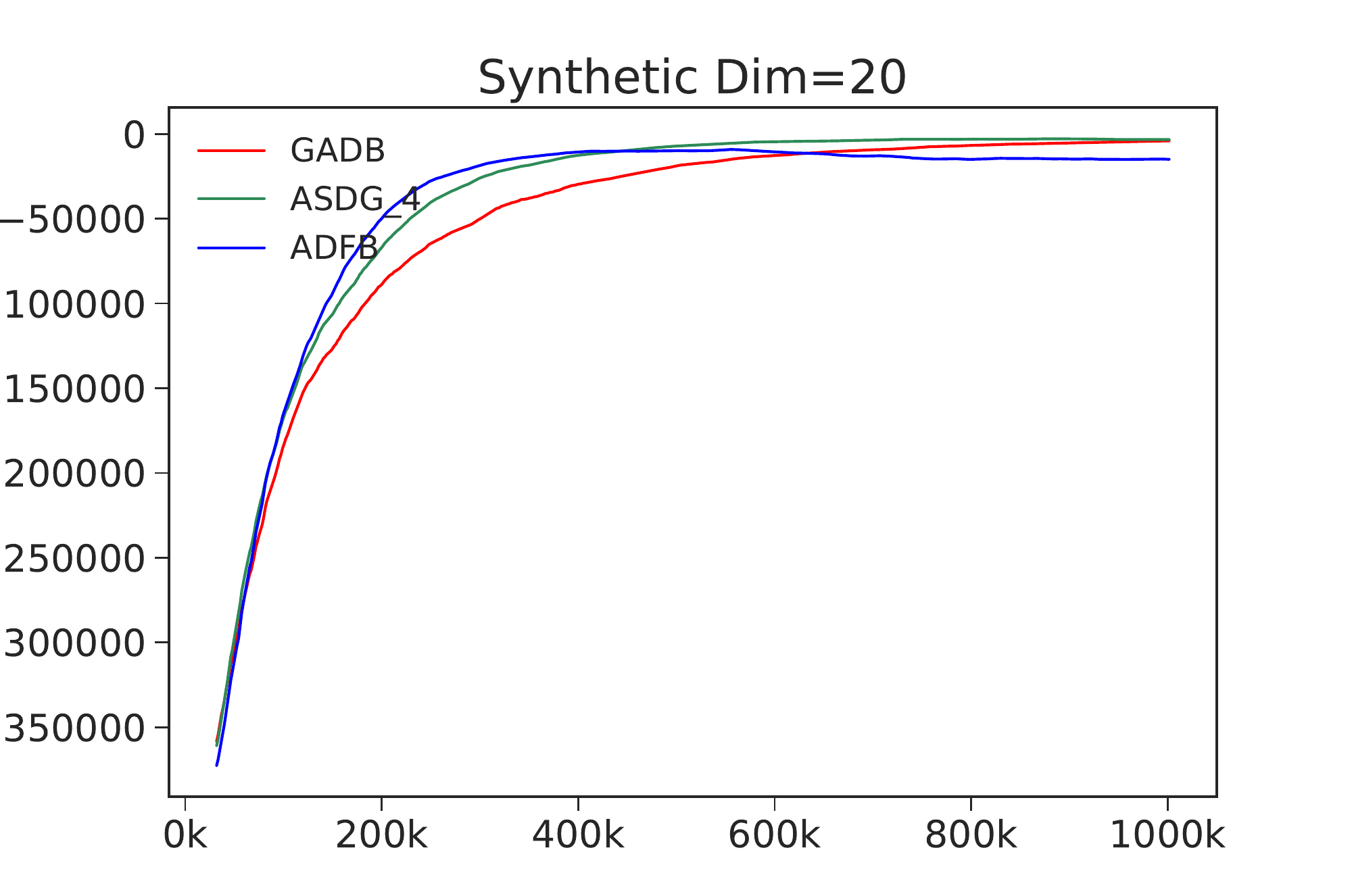}}
	\subfloat[Dim=40, K=4]{
	\includegraphics[width=0.44\textwidth]{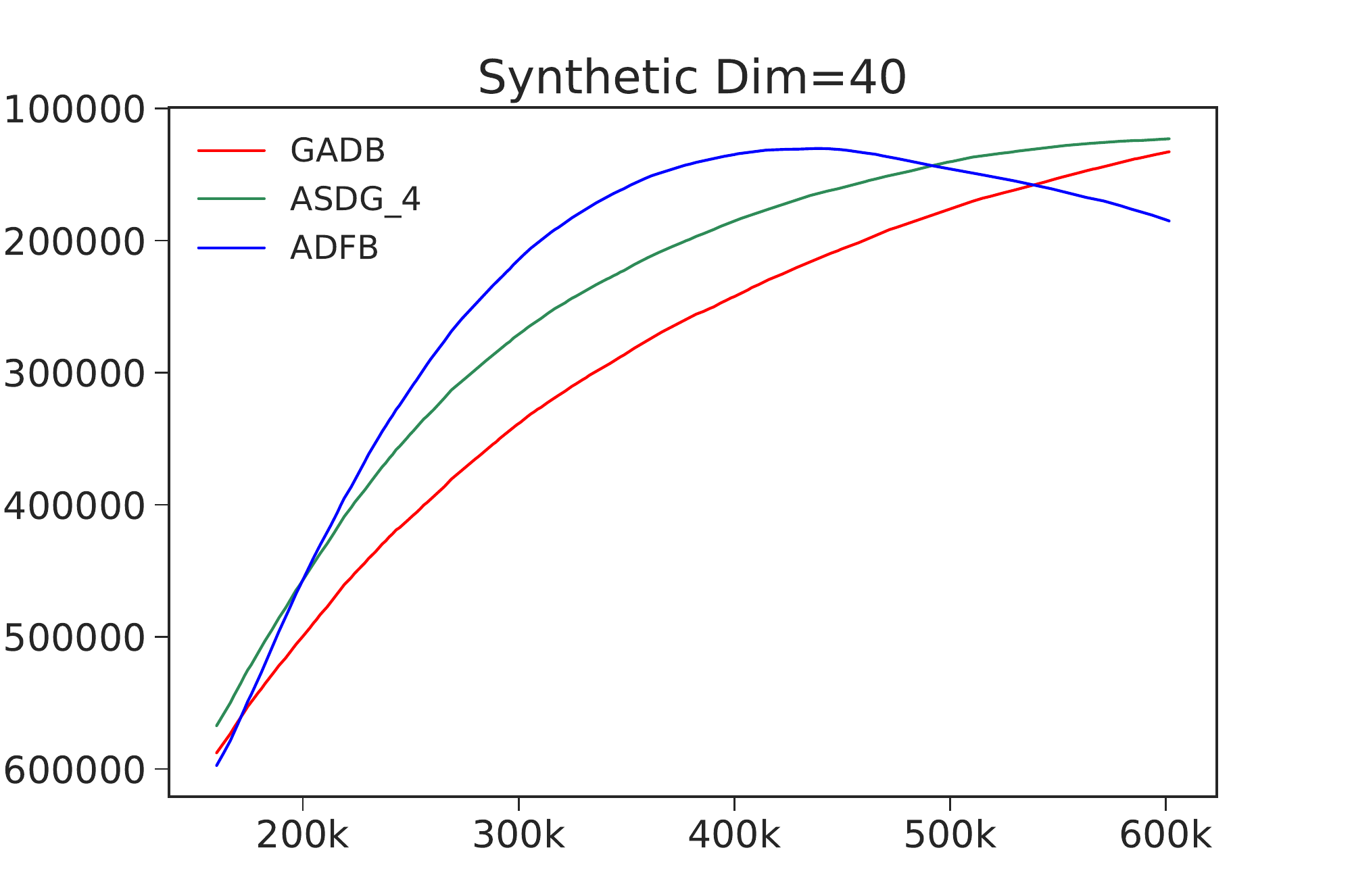}}
	\caption{Learning curve for synthetic high-dimensional 	continuous control tasks, varying from 4 to 40 dimensions. At high dimensions, our ASDG estimator provides an ideal balance between the accuracy (i.e., GADB) and efficiency (i.e., ADFB).}
	\label{synthetic}
\end{figure*}

Fig.~\ref{synthetic} shows the results on the synthetic environment for ASDG with different dimensions $m$ and number of subspaces $K$. The legend \textit{ASDG\_K} stands for our ASDG estimator with $K$ blocks assumption. For environments with relatively low dimensions such as (a) and (b), all of the algorithms converge to the same point because of the simplicity of the settings. Both ASDG and ADFB (that incorporates RB) outperform GADB significantly in terms of sample efficiency while ADFB is marginally better ASDG. For high dimensional settings such as (c) and (d), both ASDG and GADB converge to the same point with high accuracy. Meanwhile, ASDG achieves the convergence significantly faster because of its efficiency. ADFB, though having better efficiency, fails to achieve the competitive accuracy.

We observe an ideal balance between accuracy and efficiency. On the one hand, ASDG trades marginal accuracy for efficiency when efficiency is the bottleneck of the training, as is in (a) and (b). On the other hand, ASDG trades marginal efficiency for accuracy when accuracy is relatively hard to achieve, as is in (c) and (d). ASDG's tradeoff results in the combination of both the merits of its extreme cases.

We also demonstrate that the performance is robust to the assumed $K$ value in (a) when accuracy is not the major difficulty. As is shown in 
(a), the performance of ASDG is only decided by its sample efficiency, which is monotonically increased with $K$. However in complicated environments, an improper selection of $K$ may result in the loss of accuracy. Hence, in general, ASDG performs best overall when the $K$ value is set to the right value instead of the maximum. 

% Continuous Control
\subsection{OpenAI Gym's MuJoCo Environments}
We present the results of the proposed POSA algorithm with ASDG estimator on common benchmark tasks. These tasks and experiment settings have been widely studied in the deep reinforcement learning community \cite{duan2016benchmarking,gu2016q,wu2018variance,liu2018action-dependent}. We test POSA on several environments with high action dimensions, namely Walker2d, Hopper, HalfCheetah, and Ant, shown in Fig.~\ref{mujoco} and Fig.~\ref{walker}. In general, ASDG outperforms ADFB and GADB consistently but performs extraordinarily well for HalfCheetah.
Empirically, we find the block diagonal assumption \eqref{as2} for the advantage function is minimally violated, and that may be one of the reasons behind its good performance.
\begin{figure*}[htb]
\centering 
\begin{minipage}{.32\textwidth}
	\includegraphics[width=\textwidth]{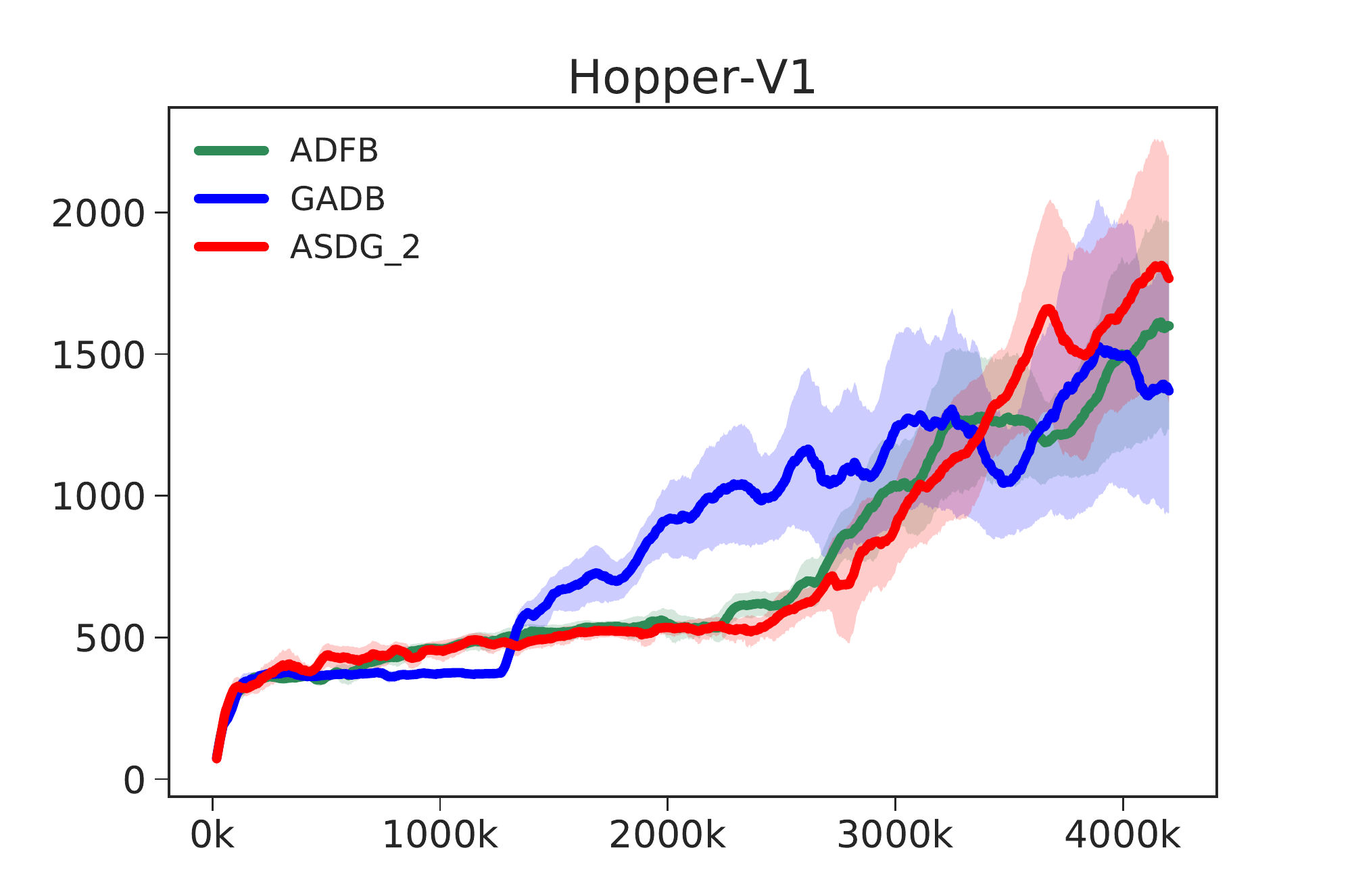}
\end{minipage}
\begin{minipage}{.32\textwidth}
	\includegraphics[width=\textwidth]{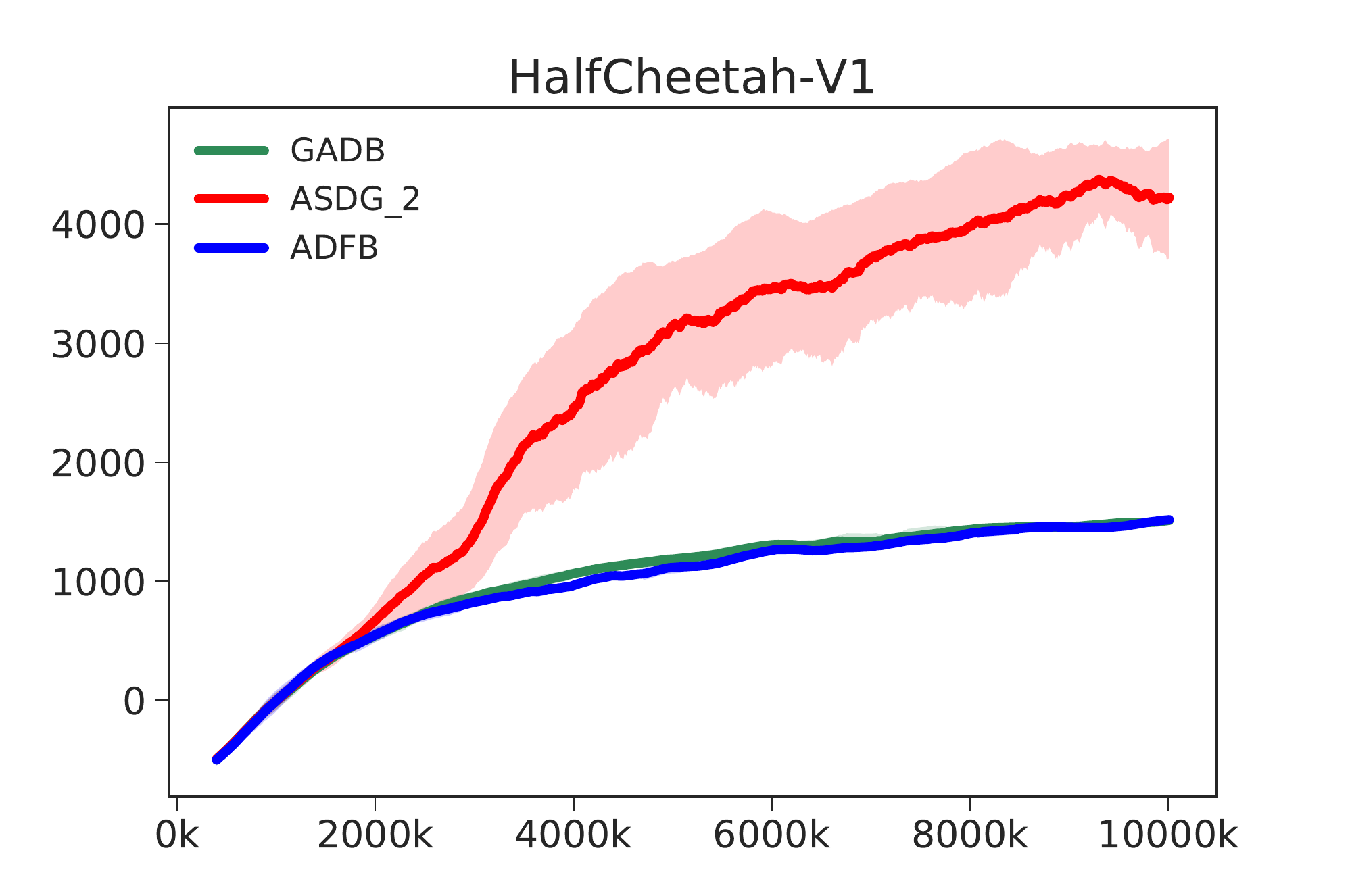}
    %{img/dim4_low.pdf}
\end{minipage}
\begin{minipage}{.32\textwidth}
	\includegraphics[width=\textwidth]{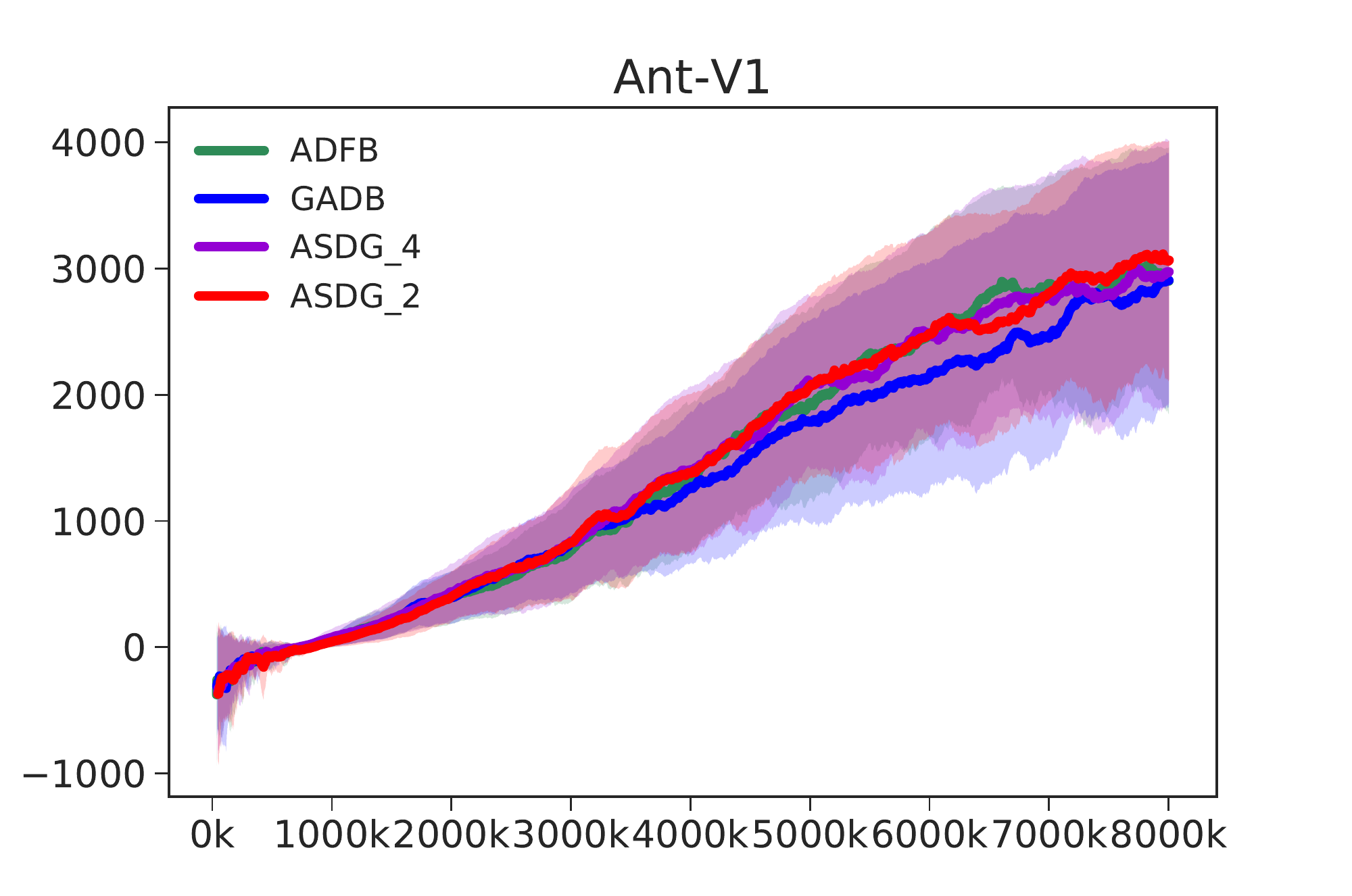}
\end{minipage}
\caption{Comparison between two baselines (ADFB, GADB) and our ASDG estimator on various OpenAI Gym Mujoco continuous control tasks, including Hopper-V1 (Dim=3), HalfCheetah-V1 (Dim=6) and Ant-V1 (Dim=8). Our ASDG estimator performs consistently the best across all these tasks.}
\label{mujoco}
\end{figure*}

\begin{figure}[htb]
\centering 
\includegraphics[width=3.5in]{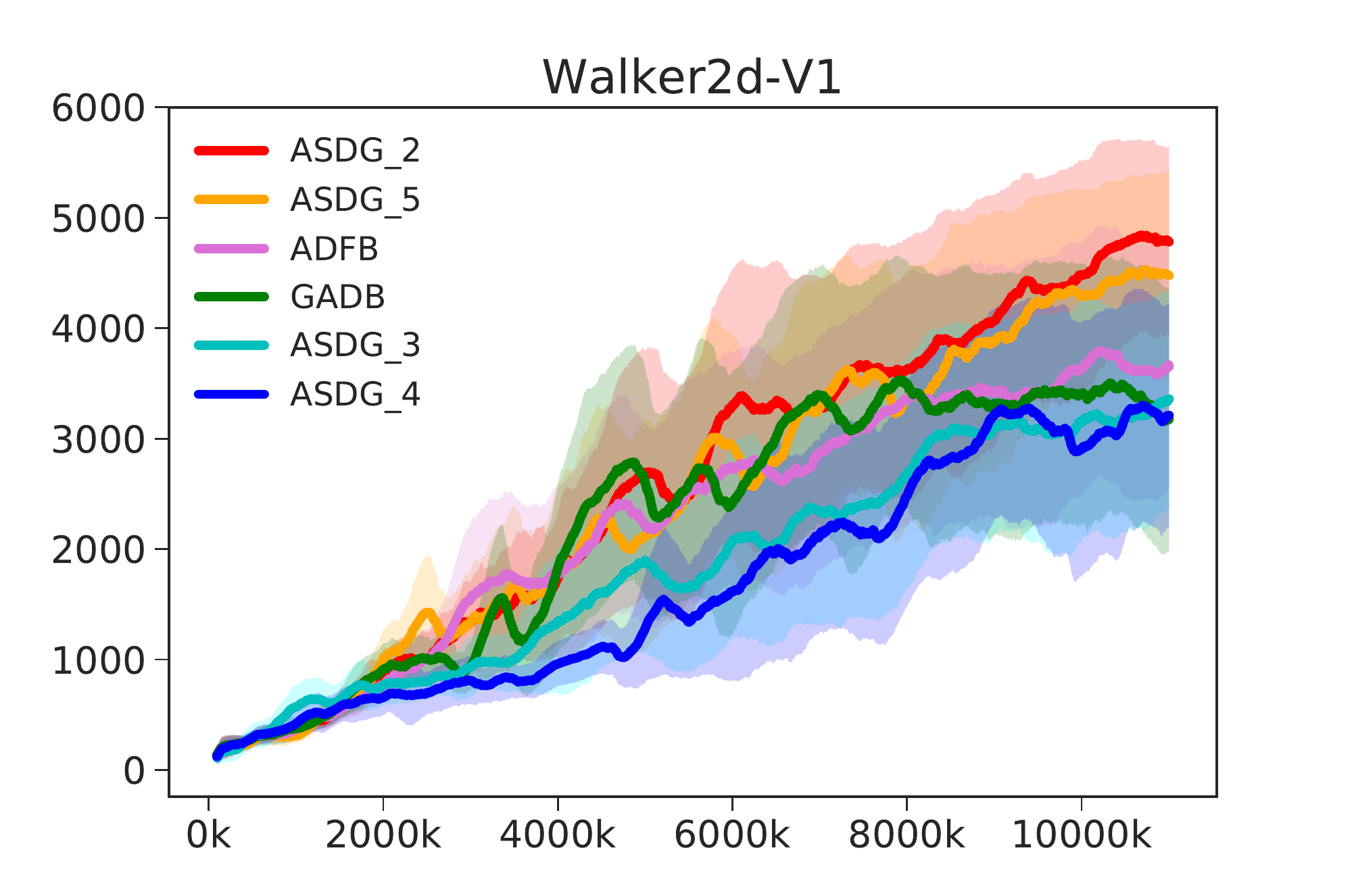}
\caption{The choices of action subspace number $K$ in the Walker2d-V1 environment.}
\label{walker}
\end{figure}

To investigate the choice of $K$, we test all the possible $K$ values in Walker2d. The optimal $K$ value is supposed to be between its extreme $K=1$ and $K=m$ cases. Empirically, we find it effective to conduct a grid search. We consider the automatically approach to finding the optimal $K$ value an interesting future work.

\section{Conclusion}

We propose action subspace dependent gradient (ASDG) estimator, which combines Rao-Blackwell theorem and Control Variates theory into a unified framework to cope with the high dimensional action space. 
We present policy optimization with second-order advantage information (POSA), which captures the second-order information of the advantage function via the wide \& deep architecture and exploits the information to find the dependency structure for ASDG. 
ASDG reduces the variance from the original policy gradient estimator while keeping it unbiasedness under relatively weaker assumptions than previous studies \cite{wu2018variance}. 
POSA with ASDG estimator performs well on a variety of environments including high-dimensional synthetic environment and OpenAI Gym's MuJoCo continuous control tasks. 
It ideally balances the two extreme cases and demonstrates the merit of both the methods.
\bibliographystyle{plain}
\bibliography{arxiv18}

\end{document}